\theoremstyle{plain}
\newtheorem{theorem}{Theorem}
\newtheorem{lemma}[theorem]{Lemma}
\theoremstyle{definition}
\newtheorem{assumption}{Assumption}
\begin{document}

\twocolumn[
\icmltitle{Classifying Treatment Responders Under Causal Effect Monotonicity}

\icmlsetsymbol{equal}{*}

\begin{icmlauthorlist}
\icmlauthor{Nathan Kallus}{cornell}
\end{icmlauthorlist}

\icmlaffiliation{cornell}{School of Operations Research and Information Engineering and Cornell Tech, Cornell University}

\icmlcorrespondingauthor{Nathan Kallus}{kallus@cornell.edu}

\icmlkeywords{Causal inference, Individual-level effect estimation, Conditional average treatment effects, classification, surrogate loss}

\vskip 0.3in
]

\printAffiliationsAndNotice{}  %

\begin{abstract}
In the context of individual-level causal inference, we study the problem of predicting whether someone will respond or not to a treatment based on their features and past examples of features, treatment indicator (e.g., drug/no drug), and a binary outcome (e.g., recovery from disease). As a classification task, the problem is made difficult by not knowing the example outcomes under the opposite treatment indicators. We assume the effect is monotonic, as in advertising's effect on a purchase or bail-setting's effect on reappearance in court: either it would have happened regardless of treatment, not happened regardless, or happened only depending on exposure to treatment. Predicting whether the latter is latently the case is our focus. While previous work focuses on conditional average treatment effect estimation, formulating the problem as a classification task rather than an estimation task allows us to develop new tools more suited to this problem. By leveraging monotonicity, we develop new discriminative and generative algorithms for the responder-classification problem. We explore and discuss connections to corrupted data and policy learning. We provide an empirical study with both synthetic and real datasets to compare these specialized algorithms to standard benchmarks.
\end{abstract}

\section{Introduction}\label{sec:intro}

In many domains where personalization is of interest, such as healthcare and marketing, a central problem is individual-level causal inference on treatment effects, which are the differences in outcome if a treatment is applied and if not applied.
The aim is to learn a function that, given a rich set of features describing an individual, predicts the causal effect of an intervention on the individual, such as the effect of a pharmaceutical drug on their mortality or of an advertisement on whether they purchase the product.
Compared to aggregate average causal effects on a whole population,
such fine-grained predictions can better describe how an intervention would affect a specific individual and help determine whether it should be applied in their case. %
Learning such a function
from either experimental or observational data
has been the subject of much recent research.
\citep[among others; see \cref{sec:related}]{kunzel2017meta,shalit2017estimating,wager2017estimation}.
The key difficulty in this task arises due to what is often termed the \emph{Fundamental Problem of Causal Inference}: that for any individual in the data, the data only contains the outcome given that the treatment was in fact applied or not, and it does not contain the counterfactual outcome under the opposite scenario. This difficulty arises in both experimental and observational data, although the former has the benefit of a priori eliminating any potential additional biases due to treatment selection via randomization.

In many applications, the outcomes of interest are binary. In medicine, we are often interested in mortality, recovery, readmission, or disease remission. In advertising, we are interested in whether or not a user purchases, visits, clicks, etc. The same holds in many other applications in domains ranging from criminal justice to education policymaking. In deciding whether to release a defendant on their own recognizance (and not require bail), a judge is interested in whether or not a given defendant will fail to reappear in court. 
In designing job training programs, a key outcome of interest is whether the recipient secures employment afterward or not. 
Moreover, in many applications and for many outcomes of interest, the treatment's effect can only be monotonic: 
while it may or may not have any effect,
exposing someone to an advertisement does not make them \emph{less} likely to visit;
while anticoagulants may have an unknown effect on an individual stroke patient's mortality, it can only make the occurrence of a haemorrhage \emph{more} likely;
and
requiring to post bail does not make a defendant \emph{less} likely to reappear.

When outcomes are binary and effects are monotonic, 
the individual-level causal inference question boils down to
just whether a given individual will respond to the treatment or not.
In all of the above settings, either the outcome event of interest would have happened anyway, not happened anyway, or happened if and only if treatment was applied.
We call instances that fall in the latter category responders and those in the former two category non-responders. 
An instance can be finer than an individual because it refers to a particular realization, where an individual could have a probability of being in any one of these categories.

In this paper, we study the problem of learning to classify responders in settings with binary outcomes and monotonic effects. This is a unique classification problem as it suffers from the problem that the labels are not observed in the training data: due to the fundamental problem of causal inference, we do not know the counterfactual outcome bit and whether it is the same as the observed outcome or flipped. 
By leveraging monotonicity we develop a new discriminative approach based on minimizing a surrogate loss for the responder-classification task. Using a hinge loss and kernelizing the decision function, this gives rise to an algorithm we term RespSVM. We discuss the approach from a corrupted-label perspective as well as what happens if monotonicity fails. Based on the corrupted-label perspective, we further develop a new generative approach that gives rise to a new cross-entropy loss that we use in an algorithm we term RespNet. 
We then explore how these algorithms compare to standard benchmarks from individual-level effect estimation. Our empirical study includes both synthetic and real-data examples and shows that, when outcomes are binary and classifying response is of interest, specialized algorithms such the ones we develop can provide better performance.

\subsection{Problem setup}

We consider 
a population of instances where each instance is associated with the following random variables:
\begin{itemize}[leftmargin=*,,labelindent=0in,topsep=-1ex,itemsep=0ex,partopsep=0ex,parsep=0ex]
\item $X\in\R p$, features to be used to predict outcomes and treatment response (also known as baseline covariates);
\item $Y(+1)\in\{-1,+1\}$, outcome if 
treatment is applied;
\item $Y(-1)\in\{-1,+1\}$, outcome if 
treatment is \emph{not} applied.
\end{itemize}
Note that we can also conceive of $Y(+1),Y(-1)$ as the potential outcomes of any two alternative interventions, $+1$ and $-1$. Here we identify intervention $+1$ with applying the treatment and $-1$ with \emph{not} applying treatment only for the sake of exposition.
Note also that if we would rather consider an instance as having some \emph{probability} of having some particular outcomes rather than having certain binary outcomes, we can just simply augment the population appropriately with each binary-outcome scenario. 

The causal effect of treatment in each instance is defined as 
the difference in outcome if treatment is applied or not
$$\text{Causal effect: $Y(+1)-Y(-1)$}$$
Our standing assumption, as motivated in the introduction, is that the 
causal effect is nonnegative:
\begin{assumption}[Monotonicity]\label{asm:monotonicity}
$Y(+1)\geq Y(-1)$.
\end{assumption}
Note that if our assumed monotonicity went the other way (treatment can only decrease outcome or keep it the same), we can simply negate the outcome (\ie, swap the physical meanings of having a $+1$ or $-1$ outcome) in order to conform to \cref{asm:monotonicity}.

Under \cref{asm:monotonicity}, we can exhaustively classify units into three categories:
\begin{itemize}[leftmargin=*,,labelindent=0in,topsep=-1ex,itemsep=0ex,partopsep=0ex,parsep=0ex]
\item Responders: $Y(+1)=+1,\,Y(-1)=-1$
\item Type-1 non-responders: $Y(+1)=Y(-1)=-1$
\item Type-2 non-responders: $Y(+1)=Y(-1)=+1$
\end{itemize}
\Cref{asm:monotonicity} simply eliminates the fourth possibility of having $Y(+1)=-1,\,Y(-1)=+1$. Notice that all responders have a causal effect of $2$ and all non-responders have a causal effect of $0$.

\subsection{The data and the classification task}

Letting 
$$
R=\pw{+1&\quad\text{$Y(+1)>Y(-1)$ (responder)}\\-1&\quad\text{Otherwise (non-responder)}}
$$
and
$$
\rho(X)=\Prb{R=+1\mid X},
$$
we consider the classification task of predicting whether a unit is a responder or not. That is, the binary classification task with features $X$ and binary label $R$.

The training data we have for this classification task does \emph{not} consist of example pairs of features and labels, however.
Instead, the training data
consists of $n$ observations of units that were either exposed or not to the treatment and the outcome corresponding to this exposure. Specifically, our observations are of the random variables
\begin{itemize}[leftmargin=*,,labelindent=0in,topsep=-1ex,itemsep=0ex,partopsep=0ex,parsep=0ex]
\item $X\in\R p$, features as before;
\item $T\in\{-1,+1\}$, an indicator of
whether the unit was exposed ($+1$) or not ($-1$) to treatment; and
\item $Y=Y(T)\in\{-1,+1\}$, the corresponding outcome.
\end{itemize}
And, our training data consist of $n$ observations, $X_1,T_1,Y_1,\dots,X_n,T_n,Y_n$, of the variables $X,T,Y$.

We focus on the case where this data came from an experiment.
We therefore assume
that treatment selection is unconfounded in that 
$$Y(+1),Y(-1)\indep T\mid X,$$ as would be the case under randomization.
We further define the randomization probability
\begin{align*}
e(X)=\Prb{T=1\mid X}\quad\text{and}\quad Q=\textstyle\frac12+\prns{e(X)-\frac12}T.
\end{align*}
For experimental data, $e(X)$ is known by design and is often constant, usually equal to $1/2$.
Observational data are characterized by the setting where unconfoundedness is an assumption rather than a design choice 
and $e(X)$ is unknown. 
One reduction for using any of the approaches we discuss on observational data is to assume unconfoundedness holds and estimate $e(X)$ from the data and impute its value. There may also be other reductions, for example leveraging orthogonalized (doubly robust) estimation \citep{chernozhukov2016double}, but for the sake of clarity we focus on known $e(X)$ and $Q$.
Note also that \cref{asm:monotonicity} is necessary for the identifiability of $\rho$ \citep{imbens1994identification,manski1997monotone,tian2000probabilities}.

Given the above data, we are interested in learning a classifier $f:\R p\to\fbraces{-1,+1}$ to predict $R$ from $X$.
To assess the quality of classifiers,
we focus on the (weighted) misclassification loss.
For $\theta\in[0,1]$, define
\begin{align*}L_\theta(f)&=\theta\Prb{\text{false positive}}+(1-\theta)\Prb{\text{false negative}}\\&=\theta\Prb{f(X)=+1,R=-1}\\&\phantom{=}+(1-\theta)\Prb{f(X)=-1,R=+1}.\end{align*}
We will usually focus on the case $\theta=1/2$, for which
$L_{1/2}(f)=(1-\op{Accuracy}(f))/2$.
Note that, given the true conditional probability $\rho(X)$,
the minimizer of $L_\theta(f)$ over all functions $f$, also known as the Bayes-optimal classifier, is $f^*_\theta(X)=\op{sign}\prns{\rho(X)-\theta}$.
This gives the standard reduction of the classification problem to estimating and thresholding conditional probabilities of labels.
However, estimating these conditional probabilities may not be necessary for successful classification and may not be the best approach.

\subsection{Relationship to CATE}\label{sec:cate}

The conditional average treatment effect (CATE)
is 
the conditional expectation of the causal effect given features:
\begin{equation}\label{eqn:CATE}
\tau(X)=\Efb{Y(+1)-Y(-1)\mid X}=\Efb{YT/Q\mid X},
\end{equation}
where the latter equality arises immediately from 
unconfoundedness
\citep{athey2016recursive}.
As a conditional expectation, CATE can be understood as the \emph{best} predictor of the causal effect in terms of squared error over all functions of $X$. 
CATE can of course be defined as in \cref{eqn:CATE} even if outcomes are not binary. As reviewed in the next section, learning CATE from observations of $X,T,Y$ has been the subject of much recent research.

When outcomes are binary and effects monotonic, we have the following relationship:
\begin{lemma}\label{lemma:cate} Under \cref{asm:monotonicity},
$
\rho(X)
=\tau(X)/2$.
\end{lemma}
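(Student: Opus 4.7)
The plan is straightforward: rewrite the causal effect $Y(+1)-Y(-1)$ as a deterministic function of $R$ under monotonicity, then take conditional expectations.

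First, I would observe that \cref{asm:monotonicity} combined with the binary nature of $Y(\pm1)$ forces the causal effect $Y(+1)-Y(-1)$ to take only two values: it equals $2$ exactly on responders (the case $Y(+1)=+1,Y(-1)=-1$) and $0$ on both types of non-responders (where $Y(+1)=Y(-1)$). This is precisely the three-category decomposition already stated just after \cref{asm:monotonicity}. Hence pointwise we have the identity
\[
Y(+1)-Y(-1)=2\,\mathbb{1}[R=+1].
\]

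Next I would take the conditional expectation of both sides given $X$. The left-hand side becomes $\tau(X)$ by the definition in \cref{eqn:CATE}, while the right-hand side becomes $2\,\Pr[R=+1\mid X]=2\rho(X)$ by the definition of $\rho$. Dividing by $2$ yields the claim.

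I do not anticipate any real obstacle: the entire content of the lemma is that monotonicity collapses the four possible $(Y(+1),Y(-1))$ combinations to three, making the causal-effect random variable a scaled indicator of $R$. The only thing worth being careful about is that the second equality in \cref{eqn:CATE}, which uses unconfoundedness to write $\tau(X)=\Efb{YT/Q\mid X}$, is not needed here; we use only the first (definitional) equality, so the argument goes through whether or not we are in the experimental setting.
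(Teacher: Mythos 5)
Your proof is correct and follows essentially the same route as the paper: both establish the pointwise identity $Y(+1)-Y(-1)=2\,\mathbb{1}[R=+1]$ from the three-category decomposition under monotonicity and then take conditional expectations given $X$. Your added remark that only the definitional part of \cref{eqn:CATE} is needed (not the unconfoundedness-based identity) is a correct and worthwhile clarification.
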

\begin{proof} Since the causal effect is 2 for responders and 0 for non-responders, we have $Y(+1)-Y(-1)=2\indic{R=+1}$ and hence
$2\rho(X)=
\Efb{2\indic{R=+1}\mid X}=\tau(X)$.
\end{proof}

This allows for a na\"ive reduction from learning $f^*_\theta(X)$ to learning $\tau(X)$ using a plug-in approach: given an estimate $\hat\tau$ of $\tau$, return $\hat f(X)=\op{sign}\prns{\hat\tau(X)-2\theta}$. This, however, does not directly optimize the classification loss and may fail in producing asymptotically optimal classifiers if $\hat\tau$ is not consistent for $\tau$. 
In our empirical results (\cref{sec:empirics}), we will use this reduction to benchmark our algorithms against a variety of existing CATE-learning algorithms.

\subsection{Related literature}\label{sec:related}

Many recent advances have been made for the important problem of estimating CATE from $X,T,Y$ data. One basic approach to estimating CATE is to estimate $\Efb{Y\mid X,T=+1}$ using some regression method on the treated data, similarly estimate $\Efb{Y\mid X,T=-1}$ on the untreated data, and return the difference, which is sometimes known as ``T-learner'' \citep{kunzel2017meta}. More sophisticated methods attempt to learn the difference directly. \citet{athey2016recursive,wager2017estimation} study adapting tree- and forest-based methods to this problem. \citet{johansson2018learning,shalit2017estimating} develop a neural network architecture for learning CATE with a shared representation as well as generalization bounds that motivate new regularizers. \citet{kunzel2017meta,nie2017learning} develop meta-learners that combine base learners for the outcome regressions and treatment model to learn CATE.
\Cref{asm:monotonicity} also implies the shape constraint $\tau(X)\geq0$, which can be used as a constraint to improve CATE estimation \citep{aronow2013beyond,huang2012assessing}.
All of the above methods estimate $\tau$. These estimates can then be used to classify responders based on the above plug-in approach. However, this does not directly address the misclassification loss.

Another strand of literature has focused on the problem of policy evaluation and learning from $X,T,Y$ data \citep{kallus2018balanced,kallus2018confounding,kallus2018policy,swaminathan2015counterfactual,strehl2010learning,dudik2011doubly,bottou2013counterfactual}. In policy evaluation the target is to estimate the average outcome that would be induced in the population if a certain policy were implemented, that is a mapping from covariates to treatment assignment. In policy learning the target is to find a policy with large average outcome. 
In \cref{sec:policylearning}, we explain that our discriminative classifiers essentially arise from formulating the classification problem as a policy learning problem.

Monotonicity is also a common assumption that arises in instrumental variable (IV) analysis with binary instruments and treatments \citep{angrist2008mostly}. 
In such models, we assume that there is an instrument (\eg, encouragement to enroll in a program) that only affects the outcome (\eg, some measurement after program) via its effect on treatment (\eg, enrollment). The instrument 
is often assumed to be monotonic in its effect on treatment take-up 
and instrument responders are known as compliers (here we instead use ``responder'' because we focus on effect on outcomes). If the instrument is valid and its effect monotonic, then the local average treatment effect (LATE) of the treatment on compliers is identifiable and can be estimated using the Wald estimator: the ratio of the 
the instrument's effect on the outcome and on the treatment
\citep{angrist1996identification}. 
Monotonicity has been shown to be critical to identifiability in the IV setting in the presence of heterogeneous effects \citep{imbens1994identification}.
We could conceivably do the same after conditioning everywhere on covariates $X$ to obtain a conditional LATE (CLATE) \citep[see e.g.][]{aronow2013beyond,athey2019generalized}.
The ratio is then between the CATE of the instrument on the outcome and the CATE of the instrument on on the treatment, and the latter (but not the former) indeed assumes monotonic effect.
But for use in this conditional Wald estimator, we would actually be interested in estimating CATE itself rather than learning a classifier.
\citet{kennedy2018sharp}, however, use a classifier given by thresholding such a CATE estimate in order to focus on subgroups where compliance is high for better interpretability. The discriminative classifiers developed herein can instead be used in their method.

\section{A Discriminative Approach using Surrogate Losses}\label{sec:discriminative}

We next proceed to develop a new discriminative approach to classifying treatment responders from $X,T,Y$ data. The approach is based on leveraging monotonicity to reformulate the weighted misclassification loss $L_\theta(f)$ in terms of an expectation over observable quantities, interpreting this expectation as an average loss, and minimizing an empirical average of surrogate losses.

We begin by reformulating the weighted misclassification loss under causal effect monotonicity.
\begin{lemma}\label{lemma:lossreformulation}
Under \cref{asm:monotonicity},
\begin{align}
\notag
\textstyle L_\theta(f)=
&\frac14\underbrace{\Efb{f(X)(2\theta-YT/Q)}}_{L'_\theta(f)}
\\[-1em]\notag&\textstyle\qquad\qquad\qquad\qquad
+\frac14\Efb{2\theta+(1-2\theta)YT/Q}.
\end{align}
\end{lemma}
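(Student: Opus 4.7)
The plan is to expand $L_\theta(f)$ by rewriting each indicator in the two probabilities as a linear function of the $\pm 1$ variables, and then to eliminate the unobservable $R$ using Lemma \ref{lemma:cate} together with the unconfoundedness identity $\E[YT/Q\mid X]=\tau(X)$.

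First I would use the identities $\indic{f(X)=\pm1}=(1\pm f(X))/2$ and $\indic{R=\pm1}=(1\pm R)/2$, which are valid because $f(X),R\in\{-1,+1\}$. Plugging these into the definition gives
\begin{align*}
L_\theta(f)
&=\tfrac{\theta}{4}\E[(1+f(X))(1-R)]\\
&\quad+\tfrac{1-\theta}{4}\E[(1-f(X))(1+R)].
\end{align*}
Expanding both products and collecting coefficients of $1$, $R$, $f(X)$, and $f(X)R$ should reduce this to
$$L_\theta(f)=\tfrac14\E\bigl[1+(1-2\theta)R+(2\theta-1)f(X)-f(X)R\bigr].$$

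Next I would convert the $R$-dependent terms into observable ones. Since $R\in\{-1,+1\}$ we have $2\indic{R=+1}=1+R$, so Lemma \ref{lemma:cate} together with the representation $\tau(X)=\E[YT/Q\mid X]$ in (\ref{eqn:CATE}) gives
$$\E[1+R\mid X]=2\rho(X)=\tau(X)=\E[YT/Q\mid X].$$
Applying the tower property then yields $\E[R]=\E[YT/Q]-1$ and, crucially, $\E[f(X)R]=\E[f(X)\,YT/Q]-\E[f(X)]$.

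Finally I would substitute these two identities into the bracketed expression above and simplify. The $\E[f(X)]$ produced by the second substitution combines with the $(2\theta-1)\E[f(X)]$ term to give $2\theta\,\E[f(X)]$, which merges with $-\E[f(X)\,YT/Q]$ into $\E[f(X)(2\theta-YT/Q)]$; the constant and $\E[YT/Q]$ pieces line up to give $\E[2\theta+(1-2\theta)YT/Q]$, matching the claim. The computation is elementary; the only real work is keeping track of the signs and coefficients when collecting terms, and the conceptual core is the single-step replacement $\E[1+R\mid X]=\E[YT/Q\mid X]$ afforded by monotonicity via Lemma \ref{lemma:cate}.
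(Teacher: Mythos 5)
Your proposal is correct and follows essentially the same route as the paper's proof: both convert the indicators to linear functions of the $\pm1$ variables and then use \cref{lemma:cate} together with iterated expectations to replace the unobservable $R$-terms by $YT/Q$. The only difference is bookkeeping --- you expand everything into monomials in $f(X)$ and $R$ before substituting, whereas the paper substitutes inside the false-positive and false-negative probabilities separately and then combines them.
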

\begin{proof} We have that
\begin{align*}
&\textstyle\Prb{\text{false negative}}
\\&\textstyle=\Efb{\indic{f(X)=-1}\indic{R=+1}}
\\&\textstyle=\frac12\Efb{\prns{1-f(X)}\rho(X)}&&\text{(iter. expectations)}
\\&\textstyle=\frac14\Efb{\prns{1-f(X)}\Efb{YT/Q\mid X}}&&\text{(\cref{lemma:cate})}
\\&\textstyle=\frac14\Efb{\prns{1-f(X)}\prns{YT/Q}}&&\text{(iter. expectations)}.
\end{align*}
A symmetric argument similarly shows $\Prb{\text{false positive}}=\frac14\Efb{\prns{1+f(X)}\prns{2-YT/Q}}$. Combining the two, respectively weighted by $1-\theta$ and $\theta$, and collecting terms yields the above result.
\end{proof}
\Cref{lemma:lossreformulation} decomposes the misclassification loss $L_\theta(f)$ into two parts: a part that depends on $f$ ($L'_\theta(f)$) and a part that is independent of $f$. It therefore shows that optimizing $L_\theta(f)$ is the same as optimizing $L'_\theta(f)$.

Notice, moreover, that we can rewrite
\begin{equation}\label{eq:lossreformweighted}L'_\theta(f)=\Efb{W\ell(f(X),Z)},\end{equation}
where
$Z=\op{sign}(YT/Q-2\theta)$,
$W=\abs{YT/Q-2\theta}$, and
$$
\ell(\hat z,z)=
\pw{+1&\hat z\neq z\\-1&\hat z=z}.
$$
This shows that $L'_\theta(f)$ has the form of a weighted misclassification loss for the problem of trying to predict $Z$ from $X$, where each instance is weighted by $W$.

\subsection{A corrupted label perspective}\label{sec:corrupted}

We now give an interpretation of this reformulation from the perspective of a classification problem with corrupted label data. For the sake of exposition, suppose $Q=1/2$, that is, the data came from a Bernoulli trial with equal treatment probabilities. Then we have that
\begin{align*}
Z=YT,\quad
W=2(1-Z\theta).
\end{align*}
That is, $Z$ is a binary label indicating whether $Y=T$ or $Y\neq T$, and examples where $Y\neq T$ get $\frac{1+\theta}{1-\theta}\in[1,\infty]$ times the weight that examples where $Y=T$ get. For example, if $\theta=1/2$, then this weight ratio is $3$ to $1$. 

To understand this disparity, we will consider $Z$ as a surrogate label for the responder status $R$. 
To see that $Z$ can serve as a surrogate label for $R$ note that by \cref{lemma:lossreformulation}, $f_\theta^*$ minimizes $L'_\theta(f)$ and hence can also be understood as a classifier for $Z$.
Next, note that an example with responder status $R=+1$ will by definition have $Y=Y(T)=T$ and therefore $Z=+1$. On the other hand, an example with responder status $R=-1$ will either have $Z=-1$ if by chance the $T$ coin flip (recall $Q=1/2$) ends up opposite to the unit's non-responder type and otherwise $Z=+1$, so $Z$ will be $\pm1$ equiprobably. Therefore, $Z$ can be seen as a corrupted form of $R$, which aligns with $R$ whenever $R$ is positive and gets scrambled whenever $R$ is negative. As such, negative examples with $Z=-1$ are seen as more definitive and therefore carry more weight.

In \cref{sec:generative}, we also use this corrupted label perspective to develop a generative approach and a new cross-entropy loss.

\subsection{Weighted surrogate loss minimization}\label{sec:surrogatemin}

We now present our first proposal for a responder-classification algorithm.
The reformulation in \cref{eq:lossreformweighted} suggests the following discriminative responder-classification algorithm based on re-weighting surrogate-loss-based classification algorithms. Let $\mathcal H\subseteq[\R p\to\Rl]$ be a function class representing score functions, let $\ell'$ be a surrogate classification loss, and let $\Omega:\mathcal H\to\Rl_+$ be a potential regularizer. Then return the classifier 
\begin{equation}\label{eqn:surrogateclassifier1}
\hat f(x)=\op{sign}(\hat h(x))
\end{equation}
where $\hat h$ is the solution 
\begin{equation}\label{eqn:surrogateclassifier2}
\hat h\in\argmin_{h\in\mathcal H}
\frac1n\sum_{i=1}^nW_i\ell'(h(X_i),Z_i)+\Omega(h).
\end{equation}
For example, if $\ell'(\hat r,z)=\max(0,1-z\hat r)$, $\mathcal H$ is a reproducing kernel Hilbert space, and $\Omega(h)=\lambda\fmagd{h}^2$ is the squared norm in that space, then we get a sample-weighted support vector machine \citep{scholkopf2001learning}. We call the corresponding responder-classification algorithm \textbf{RespSVM}. As another example, if $\ell'(\hat r,z)=\log(1+e^{\hat r})-(1+z)\hat r/2$ and $\mathcal H$ is all neural networks of a given architecture then we get a sample-weighted neural network ($\Omega(h)$ may be nothing or it may be the sum of squared weights for weight decay). We call the corresponding responder-classification algorithm \textbf{RespNet-disc}, or \textbf{RespLR-disc} in case of a linear architecture with no hidden layers.

\subsection{What happens if monotonicity fails? A policy learning perspective}\label{sec:policylearning}

While it is self-evident when one is in a setting where outcome data are binary, monotonicity is always an assumption. Moreover since we do not see counterfactual outcomes, it may not have observable ramifications and may not be testable. This raises an important question: what happens if the monotonicity assumption fails? Can we still meaningfully interpret the classifier $\hat f$ that we learn in \cref{eqn:surrogateclassifier1}?

The next result shows how we can give an interpretation based on policy learning.
\begin{lemma}\label{lemma:policy}
Minimizing $L'_\theta(f)$ is the same as maximizing $U_\theta(f)=\Efb{Y(f(X))}-2\theta\Prb{f(X)=1}$.
\end{lemma}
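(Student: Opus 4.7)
The plan is to rewrite both $L'_\theta(f)$ and $U_\theta(f)$ in terms of the CATE $\tau(X)$ and the quantity $\mathbb E[f(X)]$, and observe that the two differ only by an additive constant independent of $f$ and a negative multiplicative factor, so one's minimizer is the other's maximizer.

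First I would handle $L'_\theta(f)$. Using the IPW identity $\mathbb E[YT/Q\mid X]=\tau(X)$ from \cref{eqn:CATE} (which is a consequence of unconfoundedness, not of monotonicity), iterated expectations give
\[
L'_\theta(f)=\mathbb E[f(X)(2\theta-YT/Q)]=2\theta\,\mathbb E[f(X)]-\mathbb E[f(X)\tau(X)].
\]
Next I would handle $U_\theta(f)$. Decomposing $Y(f(X))=Y(+1)\mathbf 1\{f(X){=}{+}1\}+Y(-1)\mathbf 1\{f(X){=}{-}1\}$ and then adding and subtracting $Y(-1)$ gives
\[
\mathbb E[Y(f(X))]=\mathbb E[Y(-1)]+\mathbb E\bigl[\tau(X)\mathbf 1\{f(X){=}{+}1\}\bigr].
\]
Because $f$ is $\{-1,+1\}$-valued, $\mathbf 1\{f(X){=}{+}1\}=(1+f(X))/2$, so
\[
\mathbb E[Y(f(X))]=\mathbb E[Y(-1)]+\tfrac12\mathbb E[\tau(X)]+\tfrac12\mathbb E[f(X)\tau(X)],
\]
and $\Pr[f(X){=}1]=\tfrac12+\tfrac12\mathbb E[f(X)]$.

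Substituting into the definition of $U_\theta(f)$ and collecting terms,
\[
U_\theta(f)=C-\tfrac12\bigl(2\theta\,\mathbb E[f(X)]-\mathbb E[f(X)\tau(X)]\bigr)=C-\tfrac12 L'_\theta(f),
\]
where $C=\mathbb E[Y(-1)]+\tfrac12\mathbb E[\tau(X)]-\theta$ is independent of $f$. Since the factor $-\tfrac12$ is negative, maximizing $U_\theta(f)$ over $f$ is the same optimization problem as minimizing $L'_\theta(f)$.

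There is no real obstacle here: the only ingredient beyond bookkeeping is the IPW identity, which is already in hand and does not invoke \cref{asm:monotonicity}. This is exactly what makes the lemma meaningful as a robustness statement, since the discriminative procedure still has a clean policy-value interpretation even when monotonicity fails.
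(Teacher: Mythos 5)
Your proof is correct and follows essentially the same route as the paper's: decompose $Y(f(X))$ via the indicator identity $\mathbf 1\{f(X)=+1\}=(1+f(X))/2$, apply the IPW/unconfoundedness identity of \cref{eqn:CATE} together with iterated expectations, and observe that $U_\theta(f)$ equals a constant minus $\tfrac12 L'_\theta(f)$. Your explicit remark that \cref{asm:monotonicity} is never invoked is consistent with the paper's framing of the lemma as a robustness statement.
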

\begin{proof}
This follows from the facts that $\Efb{Y(f(X))}=\Efb{\frac{1+f(X)}{2}Y(+1)-\frac{1-f(X)}{2}Y(-1)}=\frac12\Efb{Y(+1)+Y(-1)}+\Efb{f(X)YT/Q}$ (invoking \cref{eqn:CATE}) and $\Prb{f(X)=1}=\Efb{\frac{1+f(X)}{2}}=\frac12+\frac12\Efb{f(X)}$.
\end{proof}
We can interpret \cref{lemma:policy} as follows. Suppose outcomes are rewards, where positive outcomes are preferred to negative outcomes. Suppose the function $f$ is a policy mapping features to a decision to apply treatment ($+1$) or not ($-1$). And, suppose the cost of applying treatment is $2\theta$. Then $U_\theta(f)$ is the total average rewards minus costs incurred by following the policy $f$. Then, regardless of monotonicity holding or not, by minimizing $L'_\theta(f)$ (or an empirical surrogate version thereof) we are seeking a policy that achieves a good rewards-costs trade off. 

The policy learning perspective provides a useful frame even when monotonicity holds.
Notice that if monotonicity were true, then $\tau(X)\geq0$ and, in this reward interpretation of outcomes, every unit can only benefit from treatment. Correspondingly, if treatment had no cost ($\theta=0$) and monotonicity held, we would always set $f(X)=+1$. Indeed, this would minimize false negatives. However, if we were also concerned with false positives, we would not always predict positive. Indeed, even if everyone could only stand to benefit from treatment, if there was a cost to treatment and there was an uncertainty as to whether treatment would actually make a difference in a certain context, then perhaps the treatment should not be applied.

This perspective is closely related to the reduction by
\citet{beygelzimer2009offset,zhao2012estimating} of maximizing $\Efb{Y(f(X))}$ to cost-sensitive classification, which reduces to weighted misclassification in the binary treatment case. However, since effects are monotonic, it is clear that $f(x)=+1$ constant maximizes the above. To introduce the cost to treatment in this framework one would shift all treated outcomes by $\theta$. Doing this, however, produces a \emph{different} set of weights that depend not just on the label value of $Z$ but rather depend simultaneously on $Y$ and $T$.

\section{A Generative Approach}\label{sec:generative}

We next present a new generative approach to classifying treatment responders from $X,T,Y$ data.
In this approach, we will actually estimate the conditional probability $\rho$ directly using maximum likelihood.
The approach is closely related to the corrupted label perspective we presented in \cref{sec:corrupted}.
Without loss of generality, we can consider the data as being generated by first drawing $(X,R)$ and then corrupting the $R$ label to produce $(X,Z)$. We can then use maximum likelihood to fit a generative model to the probability of observing the label $Z=YT$, which we can phrase in terms of $\rho$.

For this section, we assume that treatment assignment is equiprobable so that $Q=1/2$. Alternatively, this can be achieve by weighting each unit by $1/Q$ to create a pseudo-population where this is the case.
Under this assumption we can relate
\begin{lemma}\label{lemma:probz}
Suppose $Q=1/2$ and \cref{asm:monotonicity} holds. Then
\begin{equation}\label{eq:probz}\textstyle
\Prb{Z=z\mid X}=\frac{1+z\rho(X)}2.
\end{equation}
\end{lemma}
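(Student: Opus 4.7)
The plan is to derive \cref{eq:probz} by a direct case analysis over the three responder categories permitted by \cref{asm:monotonicity}. First I would unpack $Z$: when $Q=1/2$ and $\theta\in(0,1)$, $YT/Q-2\theta=2YT-2\theta$ takes sign equal to that of $YT$, so $Z=YT$. Thus the task reduces to computing $\Pr(YT=z\mid X)$.

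Next I would condition on the latent type of the unit. Under \cref{asm:monotonicity}, every unit is either a responder ($Y(+1)=+1,Y(-1)=-1$), a type-1 non-responder ($Y(+1)=Y(-1)=-1$), or a type-2 non-responder ($Y(+1)=Y(-1)=+1$). The type is a deterministic function of $(Y(+1),Y(-1))$, so unconfoundedness $Y(\pm1)\indep T\mid X$ implies that $T$ is independent of the type given $X$, and hence $T\mid X$ remains Bernoulli$(1/2)$ within each type. Then:
\begin{itemize}[leftmargin=*,topsep=-0.5ex,itemsep=0ex]
\item For a responder, $Y=Y(T)=T$ so $YT=T^2=+1$ almost surely.
\item For a type-1 non-responder, $Y=-1$ so $YT=-T$, which equals $\pm1$ with probability $1/2$ each since $e(X)=1/2$.
\item For a type-2 non-responder, $Y=+1$ so $YT=T$, again $\pm1$ with probability $1/2$ each.
\end{itemize}

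Combining these by the law of total probability and recalling $\rho(X)=\Pr(R=+1\mid X)$, I would compute
\[\Pr(Z=+1\mid X)=\rho(X)\cdot 1+(1-\rho(X))\cdot\tfrac12=\tfrac{1+\rho(X)}{2},\]
and symmetrically $\Pr(Z=-1\mid X)=(1-\rho(X))/2$. Both are captured by $(1+z\rho(X))/2$, giving \cref{eq:probz}.

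There is no real obstacle here; the only subtlety worth stating explicitly is that unconfoundedness, together with the type being measurable with respect to the potential outcomes, is what allows us to keep the conditional law of $T$ as Bernoulli$(1/2)$ within each latent category. Once that is noted, the proof is just three lines of case analysis and a weighted sum.
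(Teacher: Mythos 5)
Your proof is correct and follows essentially the same route as the paper's: condition on the latent responder/non-responder type, use unconfoundedness to keep $T\mid X$ Bernoulli$(1/2)$ within each type (the paper encodes the type via $A=\frac{Y(+1)+Y(-1)}{2}$ and notes $\Prb{T=t\mid X,A}=1/2$), observe that $Z=+1$ almost surely for responders and $Z=\pm1$ equiprobably for non-responders, and marginalize. The only cosmetic difference is that you explicitly derive $Z=YT$ from the sign definition, whereas the paper simply takes $Z=YT$ as given in the generative section.
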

\begin{proof}
Notice that since $R=1\implies Z=1$, we have\break
$\Prb{Z=z\mid X,R=1}=\frac{1+z}2$.
Let $A=\frac{Y(+1)+Y(-1)}2$, which is $-1$ for type-1 non-responders, $0$ for responders, and $+1$ type-2 non-responders.
By unconfoundedness and $Q=1/2$, we have $\Prb{T=t\mid X,A}=1/2$. Therefore,
\begin{align*}
&\Prb{Z=z\mid X,R=-1}\\&=
\Prb{A=-1\mid X,A\neq0}\Prb{T=-z\mid X,A=-1}\\&~\;+
\Prb{A=+1\mid X,A\neq0}\Prb{T=z\mid X,A=+1}=\textstyle\frac12.
\end{align*}
Hence, marginalizing over $R$,
$$\textstyle
\Prb{Z=z\mid X}=\rho(X)\frac{(1+z)}2+(1-\rho(X))\frac{1}2,
$$
which gives the result.
\end{proof}

\Cref{lemma:probz} shows how the corrupted label $Z$ is related to $R$ in their conditional probabilities. Note that, if $Q$ were not equal to $\frac12$ then $\Prb{A=+1\mid X,A\neq0}$ would not cancel out in the above and would remain as a nuisance parameter in the below estimation approach, that is, we would not be able to avoid also estimating the probabilities of being each type of non-responder. Having $Q=\frac12$ (\eg, by creating an appropriate weighted pseudo-population if it is not already the case) enables us to ignore this nuisance and focus just on $\rho(X)$.

\subsection{MLE of $\rho$ under monotonicity}

Based on \cref{lemma:probz}, we can formulate the negative log likelihood of observing the labels $Z_1,\dots,Z_n$ given the covariate design $X_1,\dots,X_n$ as a function of $\rho$ as a parameter:
\begin{equation}\label{eq:rhonll}\textstyle
\op{NLL}_n(\rho)=-\sum_{i=1}^n\log\prns{\frac{1+Z_i\rho(X_i)}2}.
\end{equation}
Then, given a class of probability models $\mathcal R\subseteq[\R p\to[0,1]]$ and potentially a regularizer $\Omega:\mathcal R\to\Rl_+$, we can estimate $\rho$ by optimizing a regularized maximum likelihood
\begin{equation}\label{eq:rhonllmin}\textstyle
\hat\rho\in\argmin_{\rho\in\mathcal R}\op{NLL}_n(\rho)+\Omega(\rho).
\end{equation}
Specifically, we focus on using this for neural networks, where $\mathcal R$ is neural networks of a given architecture (with a sigmoid activation at its output). We call the corresponding responder-classification algorithm \textbf{RespNet-gen}, or \textbf{RespLR-gen} in case of no hidden layers

\subsection{Comparison to weighted cross-entropy loss}

\begin{figure}[t!]\centering%
\begin{tabular}{lr}
\multicolumn{2}{c}{\includegraphics[width=0.35\textwidth]{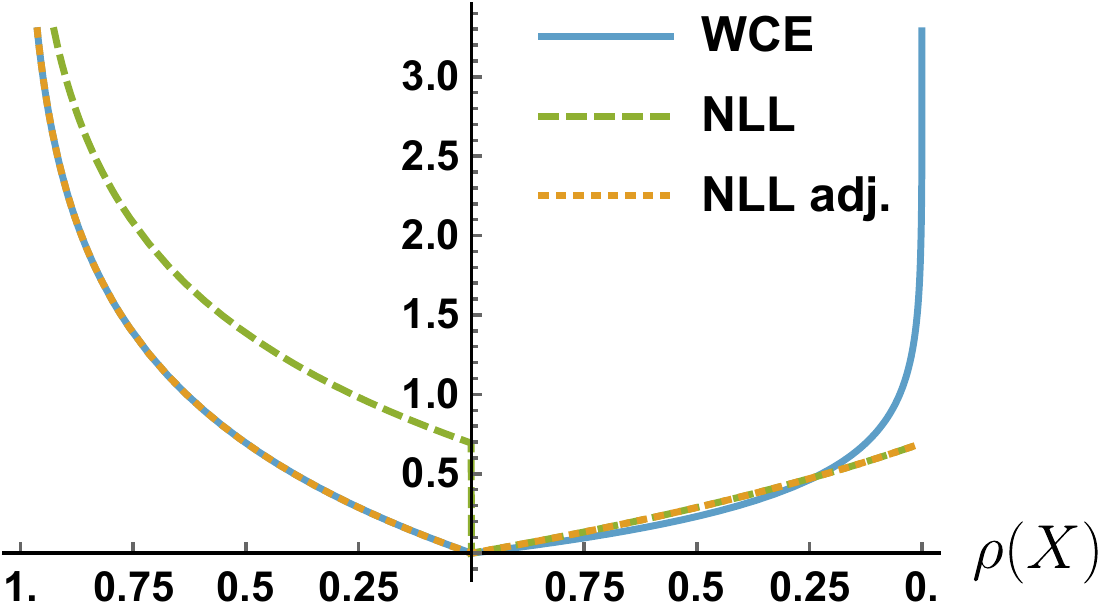}}\\%
\quad~~~$Z=-1$&$Z=+1$~~~\quad~~~~~~~~~~~~\\%
\end{tabular}%
\caption{Comparison of the weighted cross-entropy loss (WCE; \cref{eq:wce}), the negative log likelihood (NLL; negative log of \cref{eq:probz}), and the latter after subtracting $\log(2)$ from the $Z=-1$ branch. Note the tick marks on the horizontal axis.}\label{fig:nllvswce}%
\vspace{-0.75\baselineskip}\end{figure}

\begin{figure*}[t!]\centering%
\subfigure[The true label $R$]{\includegraphics[width=0.23\textwidth]{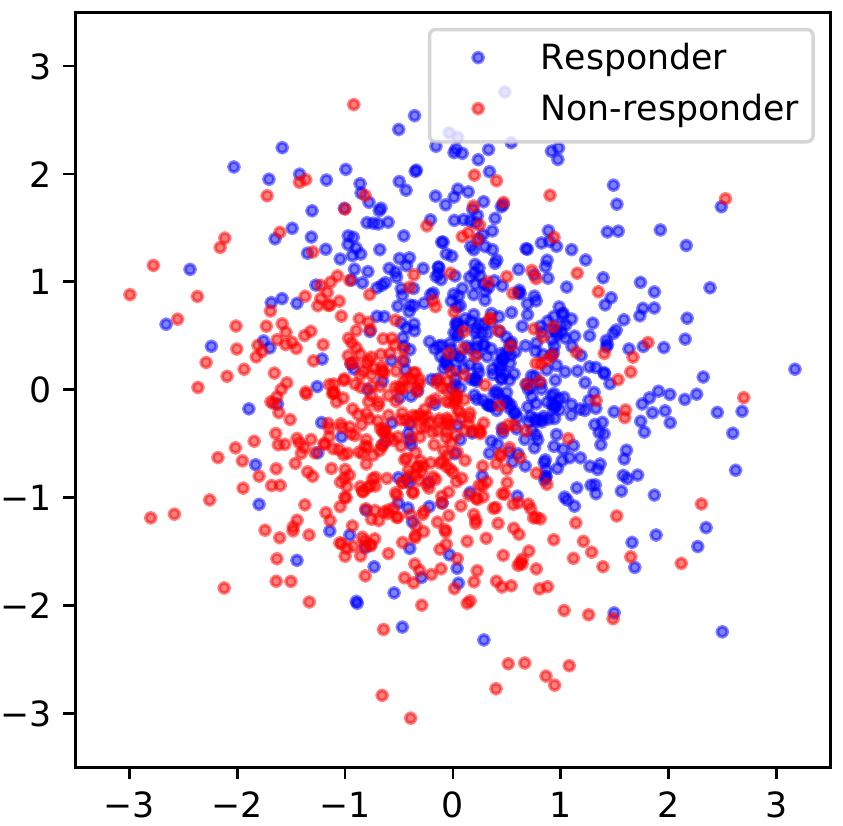}}%
\hspace{0.01\textwidth}%
\subfigure[The observable label $Z$]{\includegraphics[width=0.23\textwidth]{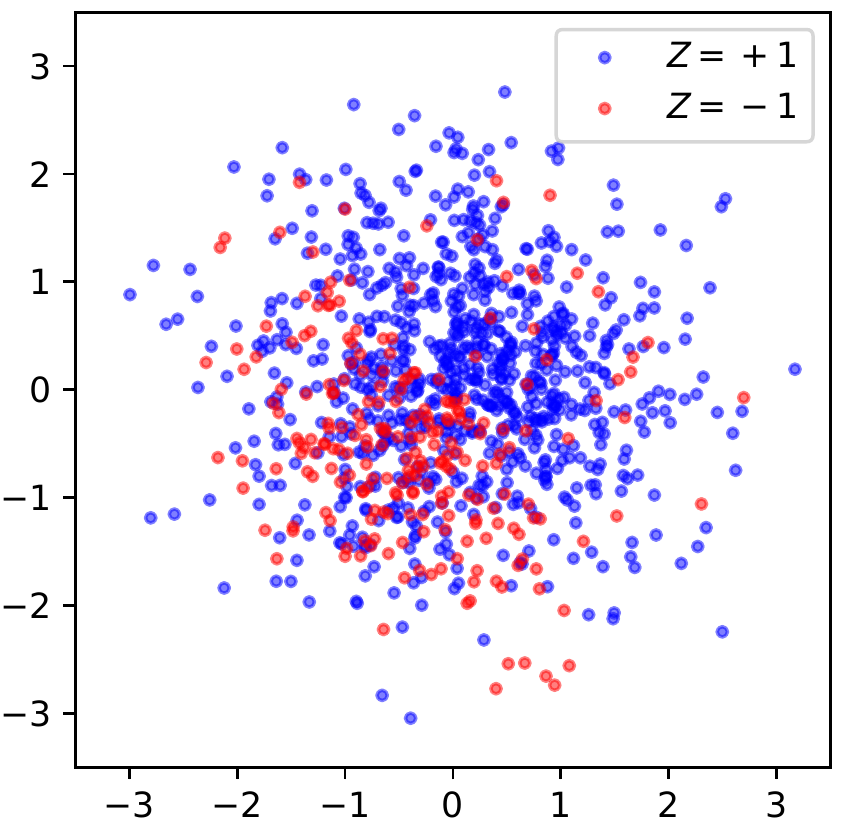}}%
\hspace{0.01\textwidth}%
\subfigure[$T=+1$]{\includegraphics[width=0.23\textwidth]{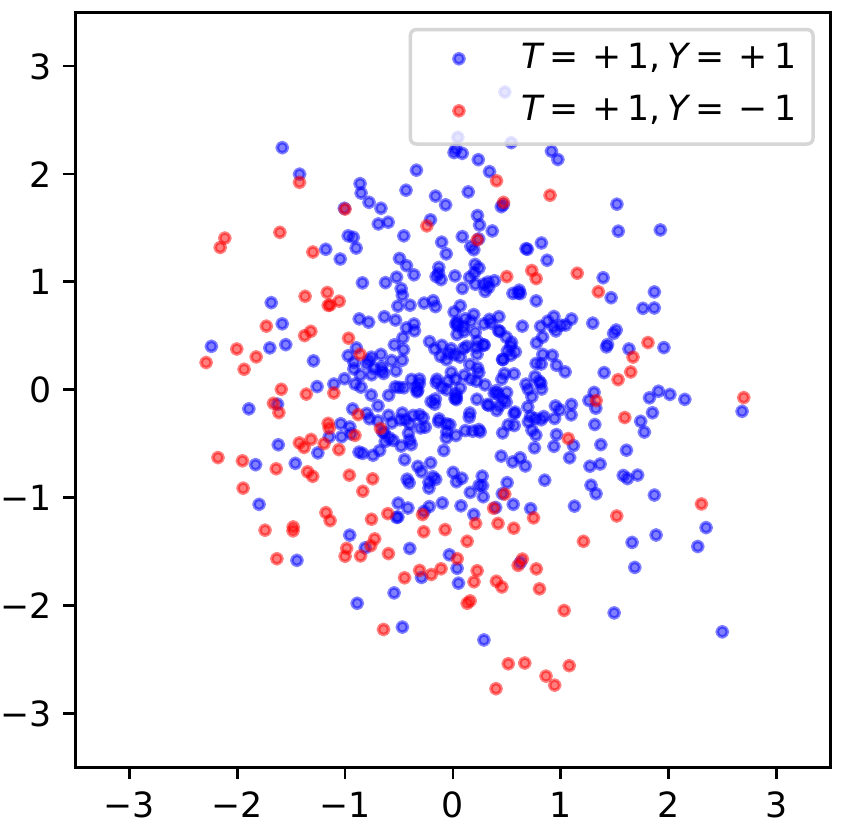}}%
\hspace{0.01\textwidth}%
\subfigure[$T=-1$]{\includegraphics[width=0.23\textwidth]{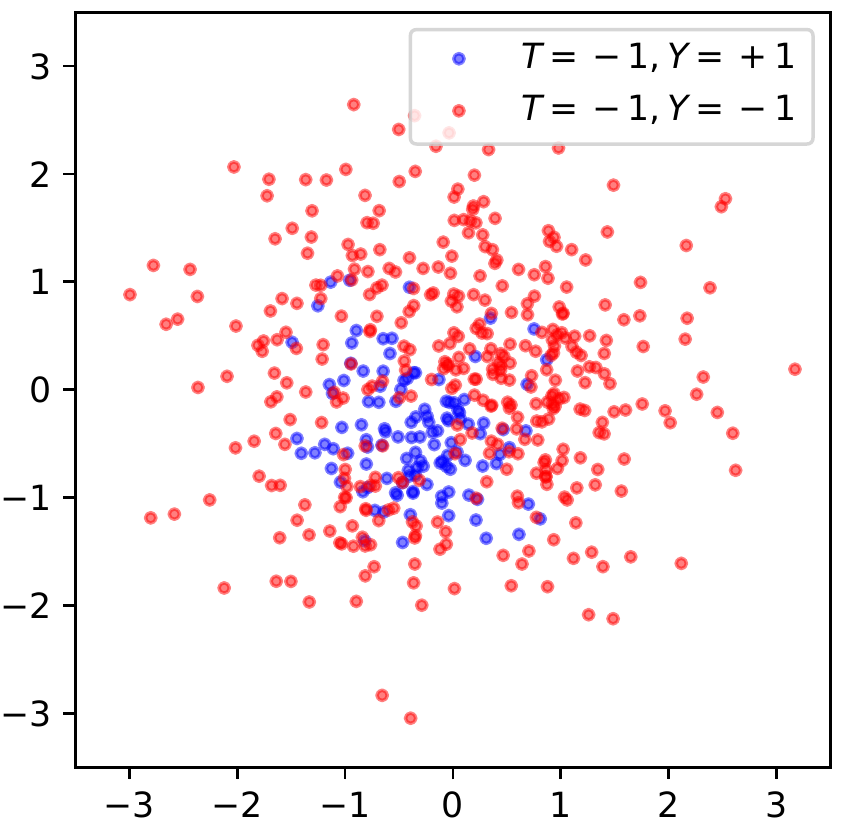}}%
\vspace{-0.5\baselineskip}
\caption{Linear scenario}\label{fig:lin}\end{figure*}

In \cref{sec:surrogatemin}, one proposal was to reweight and minimize the cross entropy loss. While the cross entropy loss serves both as a surrogate for misclassification \emph{and} as the negative likelihood objective for classification (when probabilities are set to the expit of the score), it is not the case for our problem even after reweighing. If $Q=\frac12$, the weighted cross entropy (WCE) loss applied to (the logit of) $\rho$ at a particular observation $(X,Z)$ as proposed in \cref{sec:surrogatemin} would be (after scaling by $1/3$)
\begin{equation}\label{eq:wce}\textstyle
-1\times\frac{1-Z}2\log(1-\rho(X))-\frac13\times\frac{1+Z}2\log(\rho(X)).
\end{equation}
This is distinct from the negative log of the likelihood, \cref{eq:probz}. The difference between these is shown in \cref{fig:nllvswce}, where we have stitched together the two cases $Z=\pm1$. The most noticeable feature is that the WCE touches 0 in both cases whereas NLL does not reach 0 when $Z=-1$. Indeed, even if the label $R$ is perfectly predictable from $X$, the label $Z$ is not. And, when $Z=-1$ we know that $R=-1$ necessarily, in which case observing $Z=\pm1$ was actually equiprobable and therefore the probability of $Z=-1$ can never be 1 (in fact, it is bounded by $1/2$) and hence NLL, its negative log, does not touch 0. But, fixing the label data, this amounts to constant shift in the loss function relative to $\rho$. Once we remove this shift (\ie, subtract $\sum_{i=1}^n\frac{1-Z_i}2\log2$ from $\op{NLL}_n(\rho)$ in \cref{eq:rhonll}), we obtain the curve denoted by NLL adj. in \cref{fig:nllvswce}. This matches WCE exactly in the $Z=-1$ case but is much flatter in the $Z=+1$ case and does not approach infinity as $\rho(X)\to0$. This permits the misclassification of $Z=+1$ labels, which indeed could have arisen from either $R=\pm1$ and therefore should not necessarily rule out $\rho(X)=0$. 

Note, nonetheless, that taking the conditional expectation of \cref{eq:wce} given $X$, differentiating by $\rho(X)$, and solving, gives back \cref{eq:probz} again. Same holds for the NLL loss. This shows that both approaches would be Fisher consistent. In practice, as explored in \cref{sec:empirics}, we find that the generative approach (RespNet-Gen) and its bounded loss in the $Z=+1$ case outperforms WCE (RespNet-Disc), although other surrogate losses such as hinge perform well.

\section{Empirical Studies}\label{sec:empirics}

\subsection{Synthetic datasets}

We first explore responder-classification on two synthetic datasets where we can more clearly illustrate and explain the behavior of different algorithms. We consider two scenarios for various covariate dimensions $d$. In \emph{both} scenarios we let $X\sim\mathcal N(0,I_d)$ be drawn a standard $d$-dimensional normal and $T=\pm1$ be drawn by an even coin flip. For each scenario we define $\rho(X)$ and $\alpha(X)$ below. To generate a data point given $X$ and $T$, we draw $R=\pm1$ as Bernoulli per $\rho(X)$ and $A=\pm1$ as Bernoulli per $\alpha(X)$. We then let $Y=\indic{R=+1}T + \indic{R=-1}A$.

We describe the two scenarios below:
\begin{enumerate}[leftmargin=*,,labelindent=0in,topsep=-1ex,itemsep=0ex,partopsep=0ex,parsep=0ex]
\item \textbf{Linear scenario}: $\rho(X)=0.15+0.7\indic{X_1>0}$, where $X_1$ is the first coordinate of $X$; and $\alpha(X)=1-F_{\op{Beta}(4,4)}(F_{{\chi_d^2}}(\fmagd{X}_2^2))$, where $F_{\op{Beta}(4,4)},\,F_{{\chi_d^2}}$ are the CDFs of a Beta and Chi-squared random variables, respectively. The parameters are chosen so that there are equal numbers of responder and non-responders and of type-1 and type-2 non-responders.
\item \textbf{Spherical scenario}: $\rho(X)=F_{\op{Beta}(4,4)}(F_{{\chi_d^2}}(\fmagd{X}_2^2))$; and $\alpha(X)=0.15+0.7\indic{\bigotimes_{j=1}^{d/2}(X_{2j-1}+X_{2j})>0}$, where $\bigotimes$ denotes the exclusive or (XOR) operation.
\end{enumerate}
An example draw with $d=2$ and $n=1000$ for the linear 
scenario is plotted in \cref{fig:lin}.
Panel (a) 
shows the true responder label, to which we do not have access at training. Panel
(b) shows the $Z=YT$ label, which we can observe. These figures illustrate why in order to solve the responder classification problem in panels (a) we should up-weight the $Z=-1$ examples. Panels (c) and (d) 
show the data in the treated and untreated groups, respectively. These show why it can often be harder to fit $\Prb{Y\mid X,T=\pm1}$ separately and take difference. In particular, this approach requires we actually estimate this probability rather than just lead some classifier in each of the treated and untreated sample.
\begin{figure*}[t!]\centering%
\subfigure[$d=2$]{\includegraphics[height=8.75em]{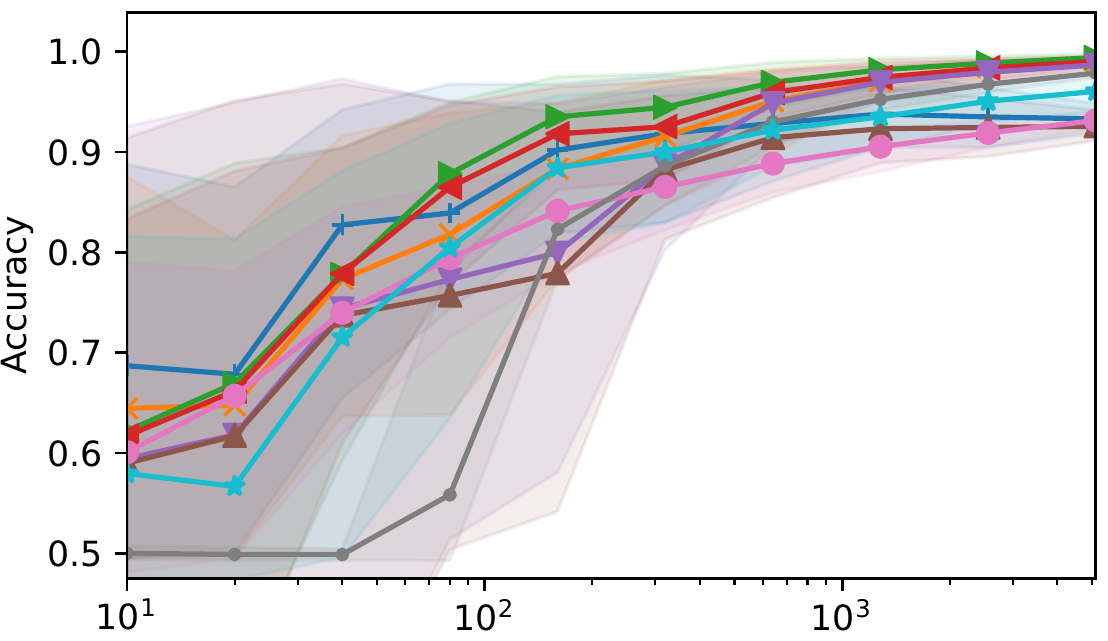}}%
\subfigure[$d=10$]{\includegraphics[height=8.75em]{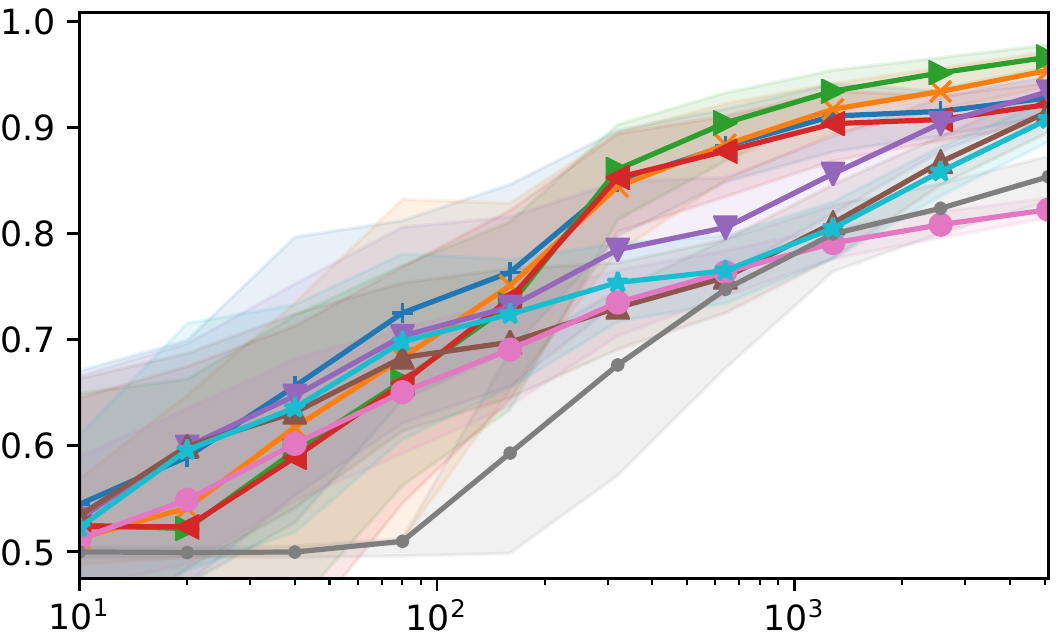}}%
\subfigure[$d=20$\qquad\qquad\quad~~~]{\includegraphics[height=8.75em]{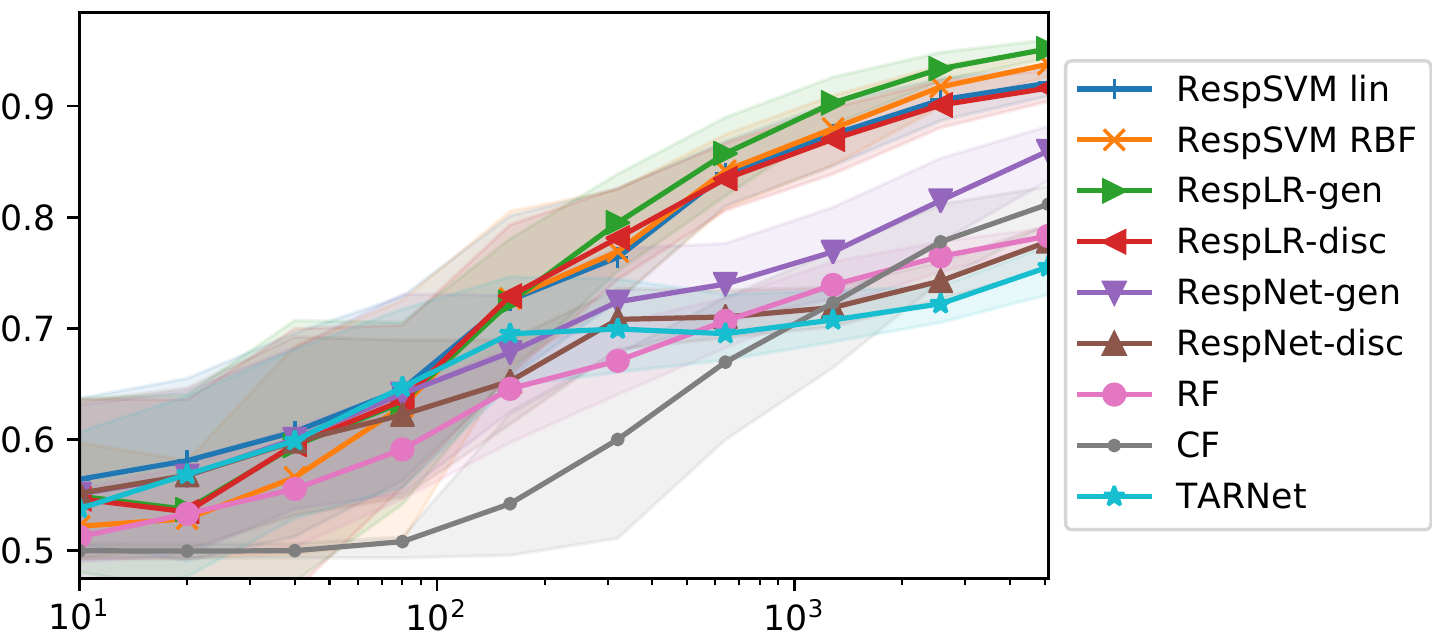}}%
\vspace{-1\baselineskip}
\caption{Accuracy results in the linear scenario as $n$ varies}\label{fig:linres}%
\end{figure*}
\begin{figure*}[t!]\centering%
\subfigure[$d=2$]{\includegraphics[height=8.75em]{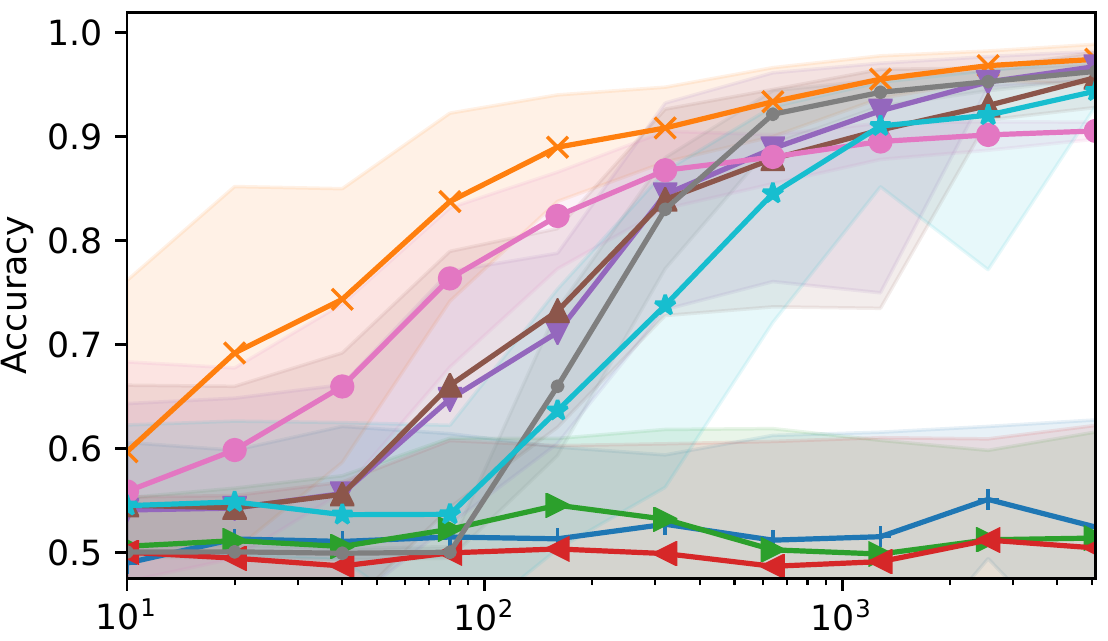}}%
\subfigure[$d=10$]{\includegraphics[height=8.75em]{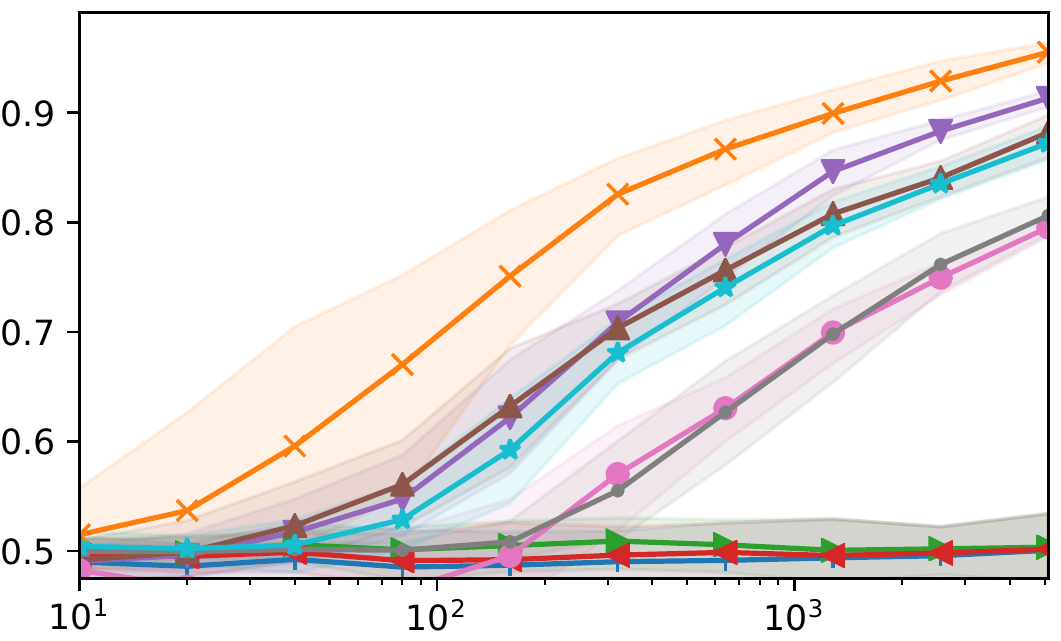}}%
\subfigure[$d=20$\qquad\qquad\quad~~~]{\includegraphics[height=8.75em]{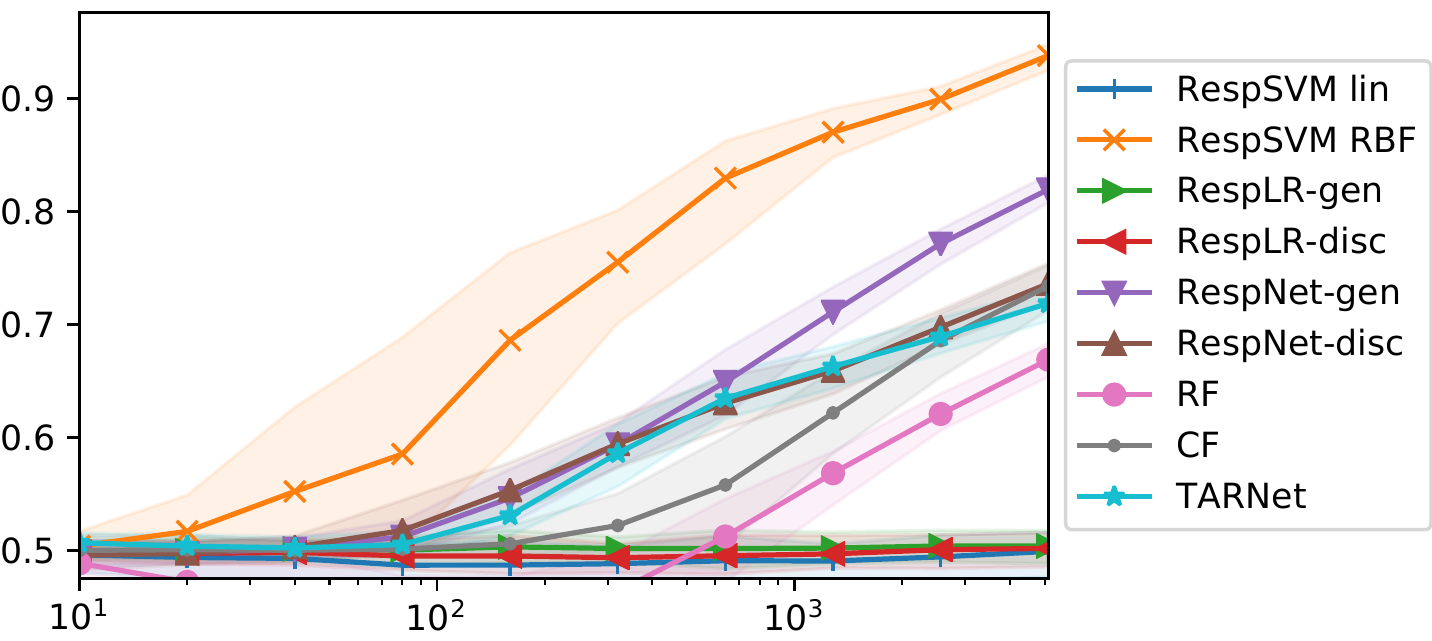}}%
\vspace{-1\baselineskip}
\caption{Accuracy results in the spherical scenario as $n$ varies}\label{fig:spherres}%
\end{figure*}

We next consider the performance of various approaches on these datasets. We focus on accuracy at predicting responder status, and hence set $\theta=1/2$. As benchmarks, we consider classifying responders by thresholding an estimate of CATE, as described in \cref{sec:cate}. We consider three prominent approaches to estimating CATE: using the differences of random forest regressions (RF; sklearn defaults as planned for v0.22, which increases default number of trees), using the causal forest method \citep[CF;][using \emph{R} package \texttt{grf} and defaults]{wager2017estimation}, and using a TARNet \citep{shalit2017estimating} with one shared hidden layer with $2d$ neurons and a hidden layer of $d$ neurons for each potential outcome with ELU activations in the interior and sigmoid activations at the outputs.
We compare these to the following variants of our methods: RespSVM with linear kernel and 5-fold cross validation (CV) to choose regularization (with $L'_\theta$ for scoring); RespSVM with RBF kernel and 5-fold CV to choose regularization and length-scale; and unregularized generative and discriminative RespNets with either no hidden layers (RespLR) or two hidden layers with $2d$ and $d$ neurons each and ELU interior activations (RespNet). RespNets and TARNets are implemented using Keras and TensorFlow and trained with Adam for 100 epochs.

In \cref{fig:linres,fig:spherres}, we plot average, 10$\thh$, and 90$\thh$ percentile accuracies of these methods in predicting $R^*=f_{\frac12}^*(X)$ as we vary $n$, $d$, and the scenario, each over 100 replications. In each scenario, we see a growing divergence between methods as we increase the dimension. In the linear case, the best methods overall are RespSVM (either kernel) and RespLR with the biggest improvements seen when $d$ is large and/or $n$ is small. In the spherical scenario, all linear models naturally fail and the best method overall is RespSVM with RBF kernel followed by RespNet-gen. Overall, the performance of RespNet-disc is similar to TARNet, which is improved upon by using the generative loss instead.

The results showcase that directly targeting the responder classification problem can be beneficial when it is of interest.
Note that the results do not imply that these existing algorithms are not good CATE learners; only that they can be improved upon in the special though common setting where outcomes are binary and effects are monotonic.

\subsection{Predicting response in decision to have third kid}\label{sec:childrearing}

We next study the application of our methods to the data derived from 1980 census. Following \citet{angrist1996children}, we construct a dataset of married couples with at least two children. We consider the treatment variable to be whether the biological sex of the two children at birth is the same and the outcome variable to be whether the couple has a third child or not. Thus, we are concerned with predicting whether the couple will respond or not to this treatment, and treatment is assigned at random equiprobably. (\citealp{angrist1996children} were originally interested in the effect of childbearing on women's participation in the labor force using the above as an instrument. Here we are only concerned in the choice of having a third kid as an outcome.)
As features we consider the ethnicity of the mother and of the father, their income and employment status, their ages at marriage, their ages at census, their ages at having their first kid and at having their second, their year of marriage, and the education level of the mother.

\begin{table}[t!]\centering
\vspace{-0.5\baselineskip}%
\caption{Results for child rearing example}\label{tab:childrearing}%
\begin{tabular}{lcccc}\toprule%
Method      & $L_\theta$ (in $0.01$) & \% 1st & \% 2nd & \% 3rd\\\midrule
RespSVM lin & $49\pm2.7$ & 100\%  &        &  \\
RespLR-gen  & $57\pm2.4$ &        & 100\%  &       \\
RespLR-disc & $58\pm2.3$ &        &        & 2\% \\
LR          & $58\pm2.3$ &        &        & 92\%\\
RF          & $58\pm2.3$ &        &        & 6\%
\\\bottomrule
\end{tabular}%
\vspace{-1\baselineskip}\end{table} 

To compare the different methods, we repeatedly draw two sets of $100,000$ units (observations of $X,T,Y$).
On one, we train each of our methods (after normalizing each column) and on the other we evaluate the false positives, false negatives, and weighted loss, using the result of \cref{lemma:lossreformulation} to estimate these. We set $\theta=0.0612$ to be equal to $\Efb{Z}$ so that always predicting positive or negative has the same weighted misclassification loss ($0.0574$). This allows us to focus on non-trivial improvements in balanced classification performance. We focus on the linear and forest-based methods as in the last section and add the difference of logistic regressions (LR, which is equivalent to TARNet with no hidden layers).
(We were not able to run CF on this size dataset, but this was likely due to limitations of the \texttt{rpy2} package.)
In \cref{tab:childrearing} we tabulate the average and standard deviation of $L_\theta$ over 50 replications and how often each of the methods produce the best, second best, or third best result. We find, as before, that the best performing methods are RespSVM and RespNet-gen (here with no hidden layers).

Finally, as an example of inference using these approaches, we consider the distribution of coefficients in the ResLR-gen model and construct 95\% Studentized bootstrap confidence intervals \citep{efron1994introduction}. We find that the only variables without a statistically significant influence on response at 0.05 significance are: father being white vs other, mother being black vs other, the age of the father at marriage, and the education of the mother being strictly more than high school vs no high school.

\section{Conclusions}

Predicting individual-level causal effects is an important problem. In this paper we specifically studied the arguably common setting where outcomes are binary and effect is monotonic, in which case this problem reduces to determining whether someone will respond to treatment. We formulated this as a classification problem, rather than a CATE estimation problem, and used this, together with monotonicity, to develop new methods for predicting individual-level causal effects. In their common but specialized setting they outperformed standard benchmarks.

\section*{Acknowledgements}

This material is based upon work supported by the National Science Foundation under Grant No. 1846210.

\bibliography{classresp}
\bibliographystyle{icml2019}

\end{document}